\crefname{assumption}{Assumption}{Assumptions}
\pgfplotsset{width=5\columnwidth /5, compat = 1.13,
	height = 60\columnwidth /100, grid= major,
	legend cell align = left, ticklabel style = {font=\scriptsize},
	every axis label/.append style={font=\small},
	legend style = {font={\scriptsize}},title style={yshift=-7pt, font = \small} }
\newtheorem{corollary}{\bf{Corollary}}
\begin{document}
	
	\title{Asynchronous Distributed Gaussian Process Regression for Online Learning and Dynamical Systems: Complementary Document}
	
	\author{Zewen Yang, Xiaobing Dai, Sandra Hirche
		\thanks{Zewen Yang, Xiaobing Dai and Sandra Hirche are with the Chair of Information-oriented Control (ITR), School of Computation, Information and Technology (CIT), Technical University of Munich (TUM), 80333 Munich, Germany (email: zewen.yang@tum.de, xiaobing.dai@tum.de, hirche@tum.de).}
		\thanks{This is a complementary document for the paper titled with ``Asynchronous Distributed Gaussian Process Regression for Online Learning and Dynamical Systems'' \cite{anonymous2024asynchronous}.}
	}
	
	\maketitle
	
	\section{Extended Introduction and Related Work}\label{sec_appendix_relatedWorks}
	
	In the realm of real-time online Gaussian Process (GP) regression, continuously collecting the training data becomes impractical for dynamic systems due to the constraints in physical storage space and the escalating computational burden \cite{dai2023can}. 
	Instead of employing the entire dataset for prediction, several approximation techniques prove instrumental. 
	These include sparse approximation methods such as sparse subset-of-data~\cite{snelson2005sparse,titsias2009variational} and sparse kernel methods~\cite{melkumyan2009sparse}, as well as projections/embedding approaches~\cite{nayebi2019framework}. 
	Moreover, local approximation methods, such as the naive local experts, the mixture of experts, and the product of experts, present viable alternatives.
	For an in-depth exploration of these approximation techniques, comprehensive discussions can be found in review papers~\cite{liuWhenGaussianProcess2020,jiang2022gaussian}. 
	
	\subsection{Local Approximation Gaussian Process}
	
	Prominently, cooperative learning within distributed systems falls under the purview of local approximation, where each subsystem, functioning as a computation node, assumes the role of an expert.
	A common instance in this context is the mixture of experts (MOE) method~\cite{trespMixturesGaussianProcesses2000,yuan2008variational,masoudniaMixtureExpertsLiterature2014}, where each model's prediction is weighted by the predefined factor, even with the selection of experts. 
	To alleviate this limitation, various techniques have been explored to enhance aggregate weights. The product of experts (POE), leveraging the posterior mean of GP models~\cite{hintonTrainingProductsExperts2002,ng2014hierarchical}, and the generalized product of experts (GPOE), which incorporates the difference in entropy between the prior and posterior~\cite{caoGeneralizedProductExperts2015}.
	Moreover, combining spare GP and barycenters, correlated POE is introduced in \cite{cohen2020healing,schurch2023correlated}.
	In contrast to the POE framework, both the Bayesian committee machine (BCM)~\cite{trespBayesianCommitteeMachine2000} and its robust variant, robust BCM (RBCM)~\cite{deisenrothDistributedGaussianProcesses2015}, integrate the GP prior. 
	However, these models exhibit limitations in sustaining consistent predictions as the training dataset undergoes continuous expansion. 
	Notably, the generalized RBCM, reliant on dataset sharing among models, introduces challenges in the context of distributed online learning~\cite{liu2018generalized} under limited communication or with constrained bandwidth.
	Additionally, the investigation into the nested pointwise aggregation of experts has been undertaken~\cite{rulliereNestedKrigingPredictions2018,9561566}. 
	Nevertheless, the application of pointwise aggregation across the entirety of the training dataset proves unattainable within distributed systems. 
	Therefore, the discernible increase in prediction time poses substantial practical challenges, particularly in real-time dynamical systems.

	\subsection{Agent-based Gaussian Process}
	
	Distributed learning finds prominent application in multi-agent systems (MASs), where each individual agent is treated as a central node, and their neighboring agents are considered computational nodes. 
	Consequently, joint predictions are aggregated within a cooperative learning framework to enhance individual predictions~\cite{yang2024cooperative,yang2024whom}.
	
	Several efforts have been dedicated to implementing distributed Gaussian Process (DGP) methodologies within MASs. 
	Effective strategies include employing the POE to refine aggregation weights~\cite{yangDistributedLearningConsensus2021} and applying the dynamical average consensus algorithm to accelerate joint prediction convergence~\cite{ledererCooperativeControlUncertain2023}. 
	Nevertheless, these endeavors demonstrate limitations in addressing concerns specific to online learning challenges. 
	In situations where GP models lack the capability to dynamically update with new observations or measurements, a potential constraint arises, as the data-driven model may encounter difficulties in adapting to a dynamic and evolving environment. 
	The works of \cite{hoang2019collective,yin2023learning,heDistributedOnlineSparse2023,dai2024cooperative,dai2024decentralized} have strategically addressed this crucial gap, emphasizing the necessity of integrating online data to ensure the adaptability of DGP in dynamic environments.
	
	While the data fusion method is characterized as decentralized in \cite{chen2015gaussian,ouyang2020gaussian}, it is crucial to emphasize that this approach entails the need for the central server to access a subset of data, implying the amalgamation of datasets. 
	However, this requirement poses challenges for its direct applicability in distributed systems. 
	Similarly, the nested pointwise aggregation approach, as discussed in \cite{kontoudis2021decentralized}, relies on data sharing, a condition that may not be conducive to certain distributed scenarios. 
	A strategy proposed by \cite{pillonetto2018distributed} introduces a finite-dimensional approximation of GP for MAS observation, aiming to reduce computational and communication demands. 
	However, this method mandates the collection of all data, despite a reduction in data dimensionality.
	The work in \cite{tavassolipour2019learning} introduced an algorithm designed to minimize communication costs while simultaneously achieving low distortion in the reconstruction of the Gram matrix within distributed systems. 
	However, these approaches still necessitate data sharing. 
	
	In the realm of Gaussian graphical models, \cite{jalali2021aggregating} introduces an innovative approach for aggregating Gaussian experts by identifying significant violations of conditional independence. 
	The dependency among experts is ascertained through a Gaussian graphical model, which provides the precision matrix leading to an improved aggregation strategy. 
	Meanwhile, \cite{10093993} presents a graph spectrum-based Gaussian process for predicting signals defined on graph nodes. 
	This approach incorporates a highly adaptive kernel that embraces various graph signal structures through a flexible polynomial function in the graph spectral domain. 
	
	In the scenario where each agent is equipped with functional sensors, \cite{xu2011mobile} devises a distributed navigation strategy, considering the limited communication range among agents. 
	Furthermore, \cite{jang2020multi} introduces an approach for active sensing model learning tailored for multi-agent coordination. 
	This methodology aims to discern the highest peak within an unknown environmental field while incorporating strategies for collision avoidance. 
	Similarly, in \cite{luo2018adaptive,jang2020multi,ding2024resource}, the online learning approaches for field mapping and environmental modeling are proposed through the application of distributed Gaussian processes.
	
	\section{Proofs for Prediction Performance}\label{sec_proofs}
	
	\subsection{Proof of Lemma 3}
	
	Using the triangular inequality, the delayed prediction error is bounded as
	\begin{align} \label{eqn_delayed_error_triangular_inequality}
		&| f(\bm{x}(t)) - \mu_i(\bm{x}(t_i^{k})) | \nonumber \\
		\le& | f(\bm{x}(t)) - f(\bm{x}(t_i^{k})) | + | f(\bm{x}(t_i^{k})) - \mu_i(\bm{x}(t_i^{k})) | \\
		\le& | f(\bm{x}(t)) - f(\bm{x}(t_i^{k})) | + \beta \sigma_i(\bm{x}(t_i^{k})), \nonumber 
	\end{align}
	where the second inequality is obtained by using the result in Lemma 2.
	Moreover, using the reproducing property of $\kappa$ \cite{hashimoto2022learning}, the first term in \eqref{eqn_delayed_error_triangular_inequality} becomes
	\begin{align} \label{eqn_delayed_error_reproducing_property}
		&| f(\bm{x}(t)) - f(\bm{x}(t_i^{k})) |^2 \\
		\le& \| f \|_{\kappa}^2  ( \kappa(\bm{x}(t), \bm{x}(t)) + \kappa(\bm{x}(t_i^{k}), \bm{x}(t_i^{k})) - 2 \kappa(\bm{x}(t), \bm{x}(t_i^{k})) ). \nonumber
	\end{align}
	Furthermore, considering $\| f \|_{\kappa} \le \Gamma$ and
	\begin{align}
		| \kappa(\cdot, \bm{x}(t)) - \kappa(\cdot, \bm{x}(t_i^{k})) | \le L_{\kappa} d(\bm{x}(t) - \bm{x}(t_i^{k})),
	\end{align}
	inequality \eqref{eqn_delayed_error_reproducing_property} is further bounded by
	\begin{align} \label{eqn_delayed_error_f}
		| f(\bm{x}(t)) - f(\bm{x}(t_i^{k})) |^2 \le& 2 \| f \|_{\kappa}^2  L_{\kappa} d(\bm{x}(t) - \bm{x}(t_i^{k})) \\
		\le& L_f^2 d(\bm{x}(t) - \bm{x}(t_i^{k})). \nonumber
	\end{align}
	Apply \eqref{eqn_delayed_error_f} into \eqref{eqn_delayed_error_triangular_inequality}, then the boundness of the delayed prediction error as $\eta_i^{k}(t)$ in the lemma is derived.
	
	\subsection{Proof of Theorem 7}
	
	With the design of aggregation weights $\omega_k^i(t)$ and $\omega_m(t)$, it is easy to see
	\begin{align}
		\sum_{i = 1}^M \sum_{k = 0}^{\bar{k}_i(t)} \omega_k^i(t) + \omega_m(t) = 1.
	\end{align}
	Combining with the triangular inequality, the prediction error is bounded by
	\begin{align}
		| f(\bm{x}(t)) - \hat{f}(\bm{x}(t)) | \le& \sum_{i = 1}^M \sum_{k = 0}^{\bar{k}_i(t)} \omega_k^i(t) | f(\bm{x}(t)) - \mu_i(\bm{x}(t_i^{k})) | \nonumber \\
		&+ \omega_m(t) | f(\bm{x}(t)) - m(\bm{x}(t)) | \\
		\le& \sum_{i = 1}^M \sum_{k = 0}^{\bar{k}_i(t)} \omega_k^i(t) \eta_i^{k}(t) + \omega_m(t) \beta \sigma_f, \nonumber
	\end{align}
	where the second inequality is derived using the result in Lemma 3 and $| f(\bm{x}(t)) - m(\bm{x}(t)) | \le \beta \sigma_f$.
	Note that maximal $\bar{\mathfrak{I}}$ previous predictions will be used, combining with the deterministic error bound for the prior mean, the second inequality holds using union bound.
	Additionally, applying the expression of aggregation weights, the prediction error is further bounded by
	\begin{align} \label{eqn_aggregated_error_bound_end}
		&| f(\bm{x}(t)) - \hat{f}(\bm{x}(t)) | \\
		\le& \omega^2(t) \Big( \sum_{i = 1}^M \sum_{k = 0}^{\bar{k}_i(t)} \rho_k^i(t) (\eta_i^{k}(t))^{-1} + (1 - \rho(t)) (\beta \sigma_f)^{-1} \Big) \nonumber
	\end{align}
	with
	\begin{equation}
		\rho(t) = \sum_{i = 1}^M \sum_{k = 0}^{\bar{k}_i(t)} \rho_k^i(t).
	\end{equation}
	Then, \eqref{eqn_aggregated_error_bound_end} is further bounded by
	\begin{align}
		&| f(\bm{x}(t)) - \hat{f}(\bm{x}(t)) | \nonumber \\
		\le& \frac{\sum_{i = 1}^M \sum_{k = 0}^{\bar{k}_i(t)} \rho_k^i(t) (\eta_i^{k}(t))^{-1} + (1 - \rho(t)) (\beta \sigma_f)^{-1}}{ \sum_{i = 1}^M \sum_{k = 0}^{\bar{k}_i(t)} \rho_k^i(t) (\eta_i^{k}(t))^{-2} + (1 - \rho(t)) (\beta \sigma_f)^{-2} } \\
		=& \frac{\sum_{i = 1}^M \sum_{k = 0}^{\bar{k}_i(t)} \rho_k^i(t) ((\eta_i^{k}(t))^{-1} - (\beta \sigma_f)^{-1}) + (\beta \sigma_f)^{-1} }{ \sum_{i = 1}^M \sum_{k = 0}^{\bar{k}_i(t)} \rho_k^i(t) ((\eta_i^{k}(t))^{-2} - (\beta \sigma_f)^{-2}) + (\beta \sigma_f)^{-2} }. \nonumber
	\end{align}
	Considering the definition of $\rho_k^i(t)$, it has $\rho_k^i(t) = 0$ if $\eta_i^{k}(t) \ge \beta \sigma_f$, such that $(\eta_i^{k}(t))^{-1} - (\beta \sigma_f)^{-1} > 0$.
	Apply Cauchy-Schwarz inequality, then
	\begin{align}
		&\left( \sum_{i = 1}^M \sum_{k=0}^{\bar{k}_i(t)} \rho_k^i(t) ((\eta_i^{k}(t))^{-1} - (\beta \sigma_f)^{-1}) + (\beta \sigma_f)^{-1} \right)^2 \nonumber \\
		\le& \left(\sum_{i = 1}^M \sum_{k = 0}^{\bar{k}_i(t)} \rho_k^i(t) + 1 - \rho(t) \right) \\
		&\times \left( \sum_{i = 1}^M \sum_{k = 0}^{\bar{k}_i(t)} \rho_k^i(t) ((\eta_i^{k}(t))^{-1} - (\beta \sigma_f)^{-1})^2 + (\beta \sigma_f)^{-2} \right) \nonumber \\
		=& \sum_{i = 1}^M \sum_{k = 0}^{\bar{k}_i(t)} \rho_k^i(t) ((\eta_i^{k}(t))^{-1} - (\beta \sigma_f)^{-1})^2 + (\beta \sigma_f)^{-2}. \nonumber
	\end{align}
	Additionally, considering
	\begin{align}
		&\big((\eta_i^k(t))^{-1} - (\beta \sigma_f)^{-1}\big)^2 \\
		\le& \big((\eta_i^{k}(t))^{-1} - (\beta \sigma_f)^{-1}\big) \big((\eta_i^{k}(t))^{-1} + (\beta \sigma_f)^{-1}\big) \\
		=& (\eta_i^{k}(t))^{-2} - (\beta \sigma_f)^{-2}, \nonumber
	\end{align}
	one has
	\begin{align}
		&\bigg(\sum_{i = 1}^M \sum_{k = 0}^{\bar{k}_i(t)} \rho_k^i(t) \big((\eta_i^{k}(t))^{-1} - (\beta \sigma_f)^{-1}\big) + (\beta \sigma_f)^{-1}\bigg)^2 \nonumber \\
		\le& \sum_{i = 1}^M \sum_{k = 0}^{\bar{k}_i(t)} \rho_k^i(t) \big((\eta_i^{k}(t))^{-2} - (\beta \sigma_f)^{-2}\big) + (\beta \sigma_f)^{-2}. 
	\end{align}
	Therefore, the prediction error is bounded by 
	\begin{align}
		&| f(\bm{x}(t)) - \hat{f}(\bm{x}(t)) | \\
		\le& \frac{ \sqrt{\sum_{i = 1}^M \sum_{k = 0}^{\bar{k}_i(t)} \rho_k^i(t) \big((\eta_i^{k}(t))^{-2} - (\beta \sigma_f)^{-2}\big) + (\beta \sigma_f)^{-2}} }{ \sum_{i = 1}^M \sum_{k = 0}^{\bar{k}_i(t)} \rho_k^i(t) ((\eta_i^{k}(t))^{-2} - (\beta \sigma_f)^{-2}) + (\beta \sigma_f)^{-2} } \nonumber\\
		=& \left( \sum_{i = 1}^M \sum_{k = 0}^{\bar{k}_i(t)} \rho_k^i(t) \big((\eta_i^{k}(t))^{-2} - (\beta \sigma_f)^{-2}\big) + (\beta \sigma_f)^{-2} \right)^{-1/2} \nonumber \\
		=& \omega(t), 
	\end{align}
	considering the definition of $\omega(t)$ in (12) in the main context.
	
	\subsection{Proof of Corollary 8}
	
	Considering the monotonic increasing of $\omega(t)$ w.r.t $\eta_i^{k}(t)$ for $\forall i = 1,\cdots,M$ from (12) in the main context and recalling $\eta_i^{k}(t) < \beta \sigma_f$ from the management of information set in Algorithm 1, the time-varying prediction bound $\omega(t)$ is further bounded by
	\begin{align}
		\omega(t) \le& \left( \sum_{i = 1}^M \sum_{k = 0}^{\bar{k}_i(t)} \rho_k^i(t) \big((\beta \sigma_f)^{-2} - (\beta \sigma_f)^{-2}\big) + (\beta \sigma_f)^{-2} \right)^{-\frac{1}{2}} \nonumber \\
		=& \beta \sigma_f,
	\end{align}
	which concludes the proof.
	
	\section{Proof for Control Performance in Theorem 9}
	
	Recall the error dynamics in (23) in the main context as
	\begin{align}
		\dot{\bm{e}}(t) = \boldsymbol{A} \bm{e}(t) + \bm{b} (f(\bm{x}(t)) - \hat{f}(\bm{x}(t))).
	\end{align}
	We have the solution of $\boldsymbol{e}$ written as
	\begin{align}\label{eq_eSolution}
		\boldsymbol{e}\left(t\right) =&\exp\left({\boldsymbol{A}\left(t-t_0\right)}\right)\boldsymbol{e}\left(t_0\right) \nonumber \\
		&+ \int_{t_0}^{t}{\exp\left({\boldsymbol{A}\left(t-\tau\right)}\right)\boldsymbol{b}\left(f\left(\boldsymbol{x}\left(\tau\right)\right)-\hat{f}\left(\boldsymbol{x}\left(\tau\right)\right)\right)d\tau} \nonumber \\
		\leq& \left \|\exp\left({\boldsymbol{A}\left(t-t_0\right)}\right)\right \|\|\boldsymbol{e}\left(t_0\right)\| \\
		&+ \int_{t_0}^{t}{\left\|\exp\left({\boldsymbol{A}\left(t-\tau\right)}\right)\right \|\left\|f\left(\boldsymbol{x}\left(\tau\right)\right)-\hat{f}\left(\boldsymbol{x}\left(\tau\right)\right)\right\|d\tau}. \nonumber
	\end{align}
	Using the eigenvalue decomposition, the matrix $\boldsymbol{A}$ defined in (22) in the main context is formulated as
	\begin{align}
		\boldsymbol{A} = \bm{Q} \bm{\Lambda} \bm{Q}^{-1},
	\end{align}
	where $\boldsymbol{\Lambda} = \text{diag}( \Lambda _1,\cdots,\Lambda_n)$ is a diagonal matrix composed of all eigenvalues $\Lambda _i \in \mathbb{R}_{<0}, \forall i = 1,\cdots,n$ of $\boldsymbol{A}$. Moreover, referring to \cite{bernstein2009matrix}, it has
	\begin{align}
		\left \| \exp(\boldsymbol{A}t) \right \|  =& \left \| \boldsymbol{Q}\exp(\bm{\Lambda} t) \boldsymbol{Q}^{-1}\right \| \leq \left \| \boldsymbol{Q}\right \|\left \| \exp(\boldsymbol{\Lambda} t) \right \|\left \|\boldsymbol{Q}^{-1}\right \| \nonumber \\
		\leq& \left \| \boldsymbol{Q} \right \| \left \| \boldsymbol{Q}^{-1} \right \| \exp(\bar{\Lambda} t),
	\end{align}
	where $\bar{\Lambda} = \max_{i=1,\cdots,n} \Lambda_i$ is the maximal value of eigenvalues $\Lambda_i$. Considering the prediction error bound in Theorem 7, then \cref{eq_eSolution} is further derived as
	\begin{align}
		\boldsymbol{e}\left(t\right) \leq& \exp\left({\bar{\Lambda}\left(t-t_0\right) \|\boldsymbol{Q}\|\|\boldsymbol{Q}^{-1}\| \|\boldsymbol{e}(t_0)\|}\right) \nonumber \\
		&+ \|\boldsymbol{Q}\|\|\boldsymbol{Q}^{-1}\| \int_{t_0}^{t} \exp\left({\bar{\Lambda}(t-\tau)} \right)\omega(\tau) d\tau\\
		\leq& \exp\left({\bar{\Lambda}\left(t-t_0\right) }\right)\|\boldsymbol{Q}\|\|\boldsymbol{Q}^{-1}\| \|\boldsymbol{e}(t_0)\| \nonumber \\
		&+ \|\boldsymbol{Q}\|\|\boldsymbol{Q}^{-1}\| \bar{\omega} \int_{t_0}^{t} \exp\left({\bar{\Lambda}(t-\tau)}\right) d\tau , \nonumber
	\end{align}
	where $\bar{\omega} = \max_{t\in\mathbb{R}_{\geq0}}{\omega}(t)$. Due to the fact that
	\begin{align}
		\int_{t_{0}}^{t} \exp\left({\bar{\Lambda}(t-\tau)}\right) d \tau =& -\left.\bar{\Lambda}^{-1} \exp\left({\bar{\Lambda}(t-\tau)}\right)\right|_{t_{0}} ^{t} \\
		=& -\bar{\Lambda}^{-1}\left(1-\exp({\bar{\Lambda}\left(t-t_{0}\right)})\right), \nonumber
	\end{align}
	the tracking error bound 
	\begin{align}
		\boldsymbol{e}(t) \leq& \exp\left({\bar{\Lambda}\left(t-t_{0}\right)}\right)\|\boldsymbol{Q}\|\left\|\boldsymbol{Q}^{-1}\right\|\left\|\boldsymbol{e}\left(t_{0}\right)\right\| \\
		&- \left(1-\exp\left({\bar{\Lambda}\left(t-t_{0}\right)}\right)\right) \|\boldsymbol{Q}\|\|\boldsymbol{Q}^{-1}\| \bar{\Lambda}^{-1} \bar{\omega}. \nonumber
	\end{align}
	Therefore, the ultimate tracking error bound is written as
	\begin{align}
		\lim_{t \to \infty} \boldsymbol{e}(t)\le -\|\boldsymbol{Q}\|\|\boldsymbol{Q}^{-1}\|\frac{\bar{\omega}}{\bar{\Lambda}} = \|\boldsymbol{Q}\|\|\boldsymbol{Q}^{-1}\|\frac{\bar{\omega}}{| \bar{\Lambda} |},
	\end{align}
	by considering the Hurwitz $\boldsymbol{A}$ inducing $\bar{\Lambda} < 0$, which concludes the proof.
	
	\section{Lipschitz Continuity of Kernels}\label{sec_LipshitzKernels}
	
	The Lipschitz constant and corresponding distance definition is summarized in \cref{sample-table}, whose results are proven in the following subsections.
	
	\begin{table*}[t]
		\centering
		\begin{tabular}{lccc}
			\hline
			Name &Kernel & Distance function & Lipschitz constants \\
			\hline
			Linear   & $\sigma_l^2 (\bm{x} - \bm{c})^T (\bm{x}' - \bm{c}) + \sigma_b^2$ & $(\bm{x} - \bm{c})^T (\bm{x}' - \bm{c})$& $\sigma_l^2$ \\
			SE & $\sigma_f^2 \exp (-\| \bm{x} - \bm{x}'\|^2 / (2\sigma_l^{2}) )$ & $\|\boldsymbol{x}-\boldsymbol{x}'\|$&  $\sigma_f^2 \sigma_l^{-1}\exp(-0.5)$ \\
			ARD-SE   & $\sigma_f^2 \exp (-(\bm{x} - \bm{x}')^T \bm{\Sigma}_L^{-2} (\bm{x} - \bm{x}') / 2 )$  &  $\| \bm{\Sigma}_L^{-1} (\bm{x} - \bm{x}') \|$& $\sigma_f^2 \exp(-0.5)$ \\
			RQ   & $\sigma_f^2  \left ( 1+  \|\boldsymbol{x}-\boldsymbol{x}'\|^2 / (2\alpha \sigma_l^2) \right )$  & $\|\boldsymbol{x} -\boldsymbol{x}'\|$& $\frac{\sigma_f^2}{\sigma_l^2}\left( 2\alpha (\alpha + 1) / (2\alpha^2+2\alpha-1) \right)^{-\alpha - 1}\sqrt{\frac{2\alpha l^2}{2\alpha^2 + 2\alpha - 1}} $      \\
			Periodic  & $\sigma_f^2 \left ( 1+  \frac{\|\boldsymbol{x}-\boldsymbol{x}'\|^2}{2\alpha \sigma_l^2}\right )$  &$\|\boldsymbol{x}-\boldsymbol{x}'\|$& $\begin{cases}
				\frac{4\pi \sigma_f^2}{p\sigma_l^2} \exp (-\frac{2}{\sigma_l^2}), & \text{if}~ \sigma_l^2 \geq 4 \\
				\frac{2\pi \sigma_f^2}{p\sigma_l} \exp (-\frac{1}{2}),  & \text{if}~ \sigma_l^2 < 4
			\end{cases}$\\
			\hline
		\end{tabular}
		\caption{
			Distance functions and Lipschitz constants of commonly used kernels, e.g., linear (LINEAR) kernel, squared exponential (SE) kernel, automatic relevance determination-squared exponential (ARD-SE) kernel, rational quadratic (RQ) kernel, and periodic (PERIODIC) kernel.
		}
		\label{sample-table}
	\end{table*}
	
	\subsection{Proof of Lemma 4 for SE Kernel}
	
	Consider the distance function as
	\begin{align}
		d_{SE} = \|\boldsymbol{x}-\boldsymbol{x}'\|,
	\end{align}
	the derivative of $\kappa_{SE}$ w.r.t $d_{SE}$ is
	\begin{align}
		\nabla_{d_{SE}\left(x, x^{\prime}\right)} \kappa\left(x, x^{\prime}\right) =& \sigma^{2} \nabla_{d_{SE}\left(x, x^{\prime}\right)} \exp \left(-\frac{d_{x}^{2}\left(x, x^{\prime}\right)}{2 l^{2}}\right) \\
		=&-\sigma^{2} \exp \left(-\frac{d_{x}^{2}\left(x, x^{\prime}\right)}{2 l^{2}}\right) \frac{d_{x}\left(x, x^{\prime}\right)}{l^{2}}. \nonumber
	\end{align}
	Then its second derivative is written as
	\begin{align}\label{eq_secondDerivationSE}
		&\nabla_{d_{SE}\left(x, x^{\prime}\right)} \left(\left|\nabla_{d_{SE}\left(x, x^{\prime}\right)} \kappa\left(x, x^{\prime}\right)\right|\right) \nonumber\\
		=& \nabla_{d_{x}\left(x, x^{\prime}\right)}\left(\sigma^{2} \exp \left(-\frac{d_{x}^{2}\left(x, x^{\prime}\right)}{2 l^{2}}\right) \frac{d_{x}\left(x, x^{\prime}\right)}{l^{2}}\right) \nonumber\\
		=&\left(-\sigma^{2} \exp \left(-\frac{d_{x}^{2}\left(x, x^{\prime}\right)}{2 l^{2}}\right) \frac{d_{x}\left(x, x^{\prime}\right)}{l^{2}} \frac{d_{x}\left(x, x^{\prime}\right)}{l^{2}}\right) \\
		&+\left(\sigma^{2} \exp \left(-\frac{d_{x}^{2}\left(x, x^{\prime}\right)}{2 l^{2}}\right) \frac{1}{l^{2}}\right) \nonumber\\
		=&\frac{\sigma^{2}}{l^{2}} \exp \left(-\frac{d_{x}^{2}\left(x, x^{\prime}\right)}{2 l^{2}}\right)\left(-\frac{d_{x}^{2}\left(x, x^{\prime}\right)}{l^{2}}+1\right) \nonumber
	\end{align}
	As the maximum value of $|\nabla_{d_{SE}\left(x, x^{\prime}\right)} \kappa\left(x, x^{\prime}\right)|$, i.e., its the upper bound, is achieved by letting \cref{eq_secondDerivationSE} to be zero. Therefore, when $\frac{d_{x}^{2}\left(x, x^{\prime}\right)}{l^{2}}=1$, the Lipschitz constant of $\kappa_{SE}$ is obtained as
	\begin{align}
		L_{\kappa,SE} = \frac{\sigma_f^2}{\sigma_l}\exp(-\frac{1}{2}),
	\end{align}
	which concludes the proof.
	
	\subsection{Corollaries of the Other Lipschitz constant of Kernel}
	\subsection{Lipschitz Continuity for ARD-SE Kernel}
	
	The distance function and the Lipshitz constant associated with ARD-SE kernel are given in the following corollary.
	\begin{corollary}\label{corol_ARDSE}
		Consider the ARD-SE kernel used and its distance function defined as
		\begin{align*}
			d_{\text{ARD-SE}}(\bm{x}, \bm{x}') = \| \bm{\Sigma}_L^{-1} (\bm{x} - \bm{x}') \|, ~~ \forall \bm{x}, \bm{x}' \in \mathbb{R}^n,
		\end{align*}
		then the corresponding Lipschitz constant is $L_{\kappa, \text{ARD-SE}} = \sigma_f^2\exp(-0.5)$.
	\end{corollary}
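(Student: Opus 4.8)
The plan is to observe that, for this particular choice of distance function, the ARD-SE kernel collapses to a univariate function of $d_{\text{ARD-SE}}$ that is identical in form to the SE kernel with unit length-scale, so the differentiation argument used in the proof of Lemma 4 for the SE kernel applies almost verbatim. First I would note that the quadratic form appearing in the exponent satisfies $(\bm{x}-\bm{x}')^T \bm{\Sigma}_L^{-2} (\bm{x}-\bm{x}') = \|\bm{\Sigma}_L^{-1}(\bm{x}-\bm{x}')\|^2 = d_{\text{ARD-SE}}^2$, so that the kernel can be rewritten as $\kappa_{\text{ARD-SE}} = \sigma_f^2 \exp(-d_{\text{ARD-SE}}^2/2)$. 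This reduction is the crux of the argument, since it turns a multivariate object into a scalar one to which the single-variable calculus of the SE proof applies directly.

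Writing $d \equiv d_{\text{ARD-SE}}$ for brevity, I would then compute the derivative of the kernel with respect to $d$, obtaining $\nabla_d \kappa = -\sigma_f^2 \exp(-d^2/2)\, d$, and hence $|\nabla_d \kappa| = \sigma_f^2 \exp(-d^2/2)\, d$. The Lipschitz constant is the supremum of this quantity over $d \ge 0$. Mirroring the SE case, I would locate the maximizer by setting the derivative of $|\nabla_d \kappa|$ to zero, which gives $\nabla_d(|\nabla_d \kappa|) = \sigma_f^2 \exp(-d^2/2)(1 - d^2) = 0$, and therefore $d = 1$. Evaluating $|\nabla_d \kappa|$ at $d = 1$ yields $L_{\kappa,\text{ARD-SE}} = \sigma_f^2 \exp(-1/2)$, as claimed.

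It is worth remarking on why the constant differs from the SE result $\sigma_f^2 \sigma_l^{-1}\exp(-0.5)$: the absence of the $\sigma_l^{-1}$ factor is explained by the length-scale being absorbed into the matrix $\bm{\Sigma}_L$ through the chosen distance function, so that the scalar problem is effectively that of an SE kernel with $\sigma_l = 1$. This serves as a useful consistency check against \cref{sample-table}.

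The main obstacle here is less a genuine difficulty than a point requiring care: justifying that the Lipschitz constant with respect to $d_{\text{ARD-SE}}$ is correctly captured by $\sup_{d \ge 0} |\nabla_d \kappa|$. This hinges entirely on the kernel depending on $\bm{x}, \bm{x}'$ only through the scalar $d_{\text{ARD-SE}}$, which is precisely the reduction established in the first step. One must additionally confirm that the critical point $d = 1$ is a maximum rather than a minimum, which follows from inspecting the sign of $1 - d^2$: the quantity $|\nabla_d \kappa|$ is increasing on $[0,1)$ and decreasing on $(1,\infty)$, so the supremum is attained at $d = 1$ and the stated constant is indeed valid.
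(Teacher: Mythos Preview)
Your proposal is correct and rests on the same reduction the paper uses: after the substitution $\bm{\varsigma}=\bm{\Sigma}_L^{-1}(\bm{x}-\bm{x}')$ the ARD-SE kernel becomes an SE kernel with unit length-scale, and the maximizer $\|\bm{\varsigma}\|=d_{\text{ARD-SE}}=1$ yields $\sigma_f^2\exp(-0.5)$. The only presentational difference is that you work with the scalar $d_{\text{ARD-SE}}$ and mimic the Lemma~4 computation, whereas the paper differentiates in the vector variable $\bm{\varsigma}$ and then closes with an explicit mean value theorem step to obtain $|\kappa_{\text{ARD-SE}}(\bm{x},\bm{x}_1)-\kappa_{\text{ARD-SE}}(\bm{x},\bm{x}_2)|\le L_{\kappa,\text{ARD-SE}}\,\|\bm{\Sigma}_L^{-1}(\bm{x}_1-\bm{x}_2)\|$; your route is slightly more economical, while the paper's makes the passage from ``derivative bound'' to ``Lipschitz inequality in the second argument'' explicit.
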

	\begin{proof}
		Define an auxiliary variable $\bm{\varsigma}$ as $\bm{\varsigma} = \bm{\Sigma}_L^{-1} (\bm{x} - \bm{x}')$, then the ARD-SE kernel is reformulated as $\kappa_{\text{ARD-SE}}(\bm{x}, \bm{x}') = \kappa_{\text{SE}}(\bm{\varsigma}) = \sigma_f^2 \exp (-0.5 \bm{\varsigma}^T \bm{\varsigma} )$, such that the derivative of $\kappa_{\text{SE}}(\bm{x}, \bm{x}')$ w.r.t $\bm{x}$ is written as
		\begin{align}
			\frac{\mathrm{d}}{ \mathrm{d} \bm{\varsigma}} \kappa_{\text{SE}}(\bm{\varsigma})= - \kappa_{\text{SE}}(\bm{\varsigma}) \bm{\varsigma}.
		\end{align}
		Moreover, the norm of the derivative is presented as
		\begin{align}
			\| \frac{\mathrm{d}}{ \mathrm{d} \bm{\varsigma}} \kappa_{\text{SE}}(\bm{\varsigma}) \| = \kappa_{\text{SE}}(\| \bm{\varsigma} \|) \| \bm{\varsigma} \|,
		\end{align}
		where the maximum of the right-hand side is achieved when $\| \bm{\varsigma} \| = 1$ with the maximum as
		\begin{align} \label{eqn_Lk_SE}
			\max_{\bm{x} \in \mathbb{R}^n} \| \frac{\mathrm{d}}{ \mathrm{d} \bm{\varsigma}} \kappa_{\text{SE}}(\bm{\varsigma}) \| =& \max_{\bm{\varsigma} \in \mathbb{R}^n} \kappa_{\text{SE}}(\| \bm{\varsigma} \|) \| \bm{\varsigma} \| \\
			=& \sigma_f^2 \exp(-0.5) = L_{\kappa, \text{ARD-SE}}. \nonumber
		\end{align}
		Next, we consider the difference of ARD-SE kernel with different inputs as
		\begin{align}
			\kappa_{\text{ARD-SE}}(\bm{x},\bm{x}_1) - \kappa_{\text{ARD-SE}}(\bm{x},\bm{x}_2) = \kappa_{\text{SE}}(\bm{\varsigma}_1) - \kappa_{\text{SE}}(\bm{\varsigma}_2)
		\end{align}
		with $\bm{\varsigma}_1 = \bm{\Sigma}_L^{-1} (\bm{x} - \bm{x}_1)$ and $\bm{\varsigma}_2 = \bm{\Sigma}_L^{-1} (\bm{x} - \bm{x}_2)$.
		Using the mean value theorem, the absolute value of the difference is written as
		\begin{align}
			&| \kappa_{\text{ARD-SE}}(\bm{x},\bm{x}_1) - \kappa_{\text{ARD-SE}}(\bm{x},\bm{x}_2) | \\
			=& \left| \Big( \frac{\mathrm{d} \kappa_{\text{SE}}(\bm{\varsigma}^*)}{\mathrm{d} \bm{\varsigma}^*} \Big)^{T}  (\bm{\varsigma}_1 - \bm{\varsigma}_2) \right| \le  \Big\| \frac{\mathrm{d} \kappa_{\text{SE}}(\bm{\varsigma}^*)} {\mathrm{d} \bm{\varsigma}^*} \Big\| \| \bm{\varsigma}_1 - \bm{\varsigma}_2 \| \nonumber
		\end{align}
		with $\bm{\varsigma}^* = a \bm{\varsigma}_1 + (1 - a) \bm{\varsigma}_2$ with $a \in [0,1]$.
		Considering \eqref{eqn_Lk_SE} and the fact $\| \bm{\varsigma}_1 - \bm{\varsigma}_2 \| = d_x(\bm{x}_1, \bm{x}_2)$, the inequality
		\begin{align}
			| \kappa_{\text{ARD-SE}}(\bm{x},\bm{x}_1) - \kappa_{\text{ARD-SE}}(\bm{x},\bm{x}_2) | \le L_{\kappa, \text{ARD-SE}} d_x(\bm{x}_1, \bm{x}_2)
		\end{align}
		holds for any $\bm{x}, \bm{x}_1, \bm{x}_2 \in \mathbb{R}^n$, which concludes the proof.
	\end{proof}
	
	\subsection{Lipschitz Continuity for Linear Kernel}
	
	The distance function and the Lipshitz constant associated with Linear kernel are given in the following corollary.
	\begin{corollary}
		Consider the linear kernel defined by
		\begin{align}
			\kappa_{Lin}\left(x,x^\prime\right)=\sigma_l^2(\bm{x} - \bm{c})^T (\bm{x}' - \bm{c})+ \sigma_b^2,
		\end{align}
		where $\sigma_l$ is the hyper-parameter and the constant variance $\sigma_b^2$ determines how far from 0 the height of the function will be at zero. Choose the distance function as
		\begin{align}
			d_{\text{Lin}}(\bm{x}, \bm{x}') = (\bm{x} - \bm{c})^T (\bm{x}' - \bm{c}), ~~ \forall \bm{x}, \bm{x}' \in \mathbb{R}^n,
		\end{align}
		and consider the results in Lemma 3. Then the corresponding Lipschitz constant is written as
		\begin{align}
			L_{\kappa, \text{Lin}} = \sigma_l^2.
		\end{align}
	\end{corollary}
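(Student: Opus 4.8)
The plan is to follow the same ``express the kernel as a function of its distance function and bound the derivative'' recipe used for the SE kernel, but to exploit the fact that here the dependence is \emph{affine}, so the derivative is constant and no maximization is needed.

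First I would observe that, with the chosen distance function $d_{\text{Lin}}(\bm{x}, \bm{x}') = (\bm{x} - \bm{c})^T (\bm{x}' - \bm{c})$, the linear kernel is simply
\begin{align}
	\kappa_{Lin}(\bm{x}, \bm{x}') = \sigma_l^2\, d_{\text{Lin}}(\bm{x}, \bm{x}') + \sigma_b^2,
\end{align}
i.e.\ an affine function of $d_{\text{Lin}}$. Consequently the derivative of $\kappa_{Lin}$ with respect to $d_{\text{Lin}}$ is the constant $\partial \kappa_{Lin} / \partial d_{\text{Lin}} = \sigma_l^2$. Since the Lipschitz constant in the sense of \cref{sample-table} is the supremum over admissible inputs of the absolute derivative of the kernel with respect to its distance function, and this absolute derivative equals $\sigma_l^2$ everywhere, I obtain $L_{\kappa, \text{Lin}} = \sigma_l^2$ immediately, without the second-derivative/root-finding step that the SE case requires.

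To make the bound explicit and confirm that it is tight, I would fix the first argument $\bm{x}$ and difference the kernel at two choices $\bm{x}_1, \bm{x}_2$ of the second argument. The constant bias $\sigma_b^2$ cancels, leaving $\kappa_{Lin}(\bm{x}, \bm{x}_1) - \kappa_{Lin}(\bm{x}, \bm{x}_2) = \sigma_l^2 (\bm{x} - \bm{c})^T (\bm{x}_1 - \bm{x}_2)$, while the distance function satisfies $d_{\text{Lin}}(\bm{x}, \bm{x}_1) - d_{\text{Lin}}(\bm{x}, \bm{x}_2) = (\bm{x} - \bm{c})^T (\bm{x}_1 - \bm{x}_2)$; hence
\begin{align}
	| \kappa_{Lin}(\bm{x}, \bm{x}_1) - \kappa_{Lin}(\bm{x}, \bm{x}_2) | = \sigma_l^2\, | d_{\text{Lin}}(\bm{x}, \bm{x}_1) - d_{\text{Lin}}(\bm{x}, \bm{x}_2) |,
\end{align}
so the inequality is saturated and $\sigma_l^2$ is the smallest admissible constant.

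There is essentially no analytical obstacle in this corollary: the affine structure removes the need for the mean value theorem (used for ARD-SE) and for maximizing a nontrivial derivative (used for SE). The only step requiring a little care is purely organizational—keeping track of which argument is held fixed and verifying that the additive term $\sigma_b^2$ drops out of the difference—after which the identity above delivers the tight constant $L_{\kappa, \text{Lin}} = \sigma_l^2$.
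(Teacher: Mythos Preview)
Your proposal is correct and matches the paper's intent: the paper itself writes only ``The proof is straightforward and omitted for brevity,'' and your observation that $\kappa_{Lin}=\sigma_l^2\,d_{\text{Lin}}+\sigma_b^2$ is affine in $d_{\text{Lin}}$, so that $\partial\kappa_{Lin}/\partial d_{\text{Lin}}=\sigma_l^2$ is constant, is exactly the ``straightforward'' argument being alluded to. Your additional paragraph verifying tightness via the explicit difference is extra detail the paper does not provide, but it is consistent with the derivative-based recipe used for the SE and ARD-SE cases.
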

	\begin{proof}
		The proof is straightforward and omitted for brevity.
	\end{proof}
	
	\subsection{Lipschitz Continuity for RQ Kernel}
	
	The distance function and the Lipshitz constant associated with RQ kernel are given in the following corollary.
	
	\begin{corollary}
		Consider the rational quadratic (RQ) kernel defined by
		\begin{align}
			\kappa_{RQ}\left(\boldsymbol{x},\boldsymbol{x}^\prime\right)=\sigma_f^2 \left ( 1+  \frac{\|\boldsymbol{x}-\boldsymbol{x}'\|^2}{2\alpha \sigma_l^2}\right ),
		\end{align}
		where  $\sigma_f$ and $\sigma_l$ are the hyper-parameters and the parameter $\alpha > 0$ governs the relative weighting of large-scale and small-scale variations. As $\alpha$ tends towards infinity, the RQ kernel converges to the SE kernel.
		Choose the distance function as
		\begin{align}
			d_{\text{RQ}}(\bm{x}, \bm{x}') = \|\boldsymbol{x}-\boldsymbol{x}'\|, ~~ \forall \bm{x}, \bm{x}' \in \mathbb{R}^n,
		\end{align}
		and consider the results in Lemma 3. Then the corresponding Lipschitz constant is written as
		\begin{align}
			L_{\kappa, \text{RQ}} = \frac{\sigma_f^2}{\sigma_l^2}\left(\frac{2\alpha^2+2\alpha}{2\alpha^2+2\alpha-1}\right)^{-\alpha-1}\sqrt{\frac{2\alpha l^2}{2\alpha^2+2\alpha-1}}.
		\end{align}
	\end{corollary}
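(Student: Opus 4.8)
The plan is to mirror the proofs already given for the SE and ARD-SE kernels. Since $\kappa_{RQ}$ depends on its two inputs only through the scalar distance $d = \|\bm{x}-\bm{x}'\|$, I would first reduce the Lipschitz estimate to a one-dimensional optimization. Writing $\phi(d) = \sigma_f^2\big(1 + \tfrac{d^2}{2\alpha\sigma_l^2}\big)^{-\alpha}$, I would argue exactly as in Corollary~\ref{corol_ARDSE} that, for fixed $\bm{x}$, the gradient of $\kappa_{RQ}(\bm{x},\cdot)$ has norm $\|\nabla_{\bm{x}'}\kappa_{RQ}\| = |\phi'(d)|$, so that the mean value theorem yields $|\kappa_{RQ}(\bm{x},\bm{x}_1) - \kappa_{RQ}(\bm{x},\bm{x}_2)| \le \big(\sup_{d\ge 0}|\phi'(d)|\big)\,\|\bm{x}_1-\bm{x}_2\|$. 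Hence $L_{\kappa,RQ} = \sup_{d\ge 0}|\phi'(d)|$, and the whole problem collapses to maximizing a single scalar function, just as in the SE case.

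Next I would compute $\phi'(d) = -\frac{\sigma_f^2 d}{\sigma_l^2}\big(1 + \frac{d^2}{2\alpha\sigma_l^2}\big)^{-\alpha-1}$ and locate the maximizer of $|\phi'|$ by differentiating once more and setting the result to zero, exactly the device used for the SE kernel in \eqref{eq_secondDerivationSE}. This produces a critical-point condition balancing the polynomial prefactor against the power term, which I would solve for the maximizing distance $d^*$. I would then confirm that this interior critical point is the global maximum by checking the boundary behaviour: $|\phi'(d)|\to 0$ as $d\to 0^+$ (the prefactor $d$ vanishes) and as $d\to\infty$ (the power term decays, since $\alpha>0$), so a unique positive critical point must be the global maximizer.

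Finally, I would substitute $d^*$ back into $|\phi'|$ and simplify. The factor $\big(1 + \frac{d^{*2}}{2\alpha\sigma_l^2}\big)$ should collapse into the rational expression appearing as the base of the power in the stated constant, while $\frac{\sigma_f^2}{\sigma_l^2}\,d^*$ produces the square-root prefactor, together reproducing the claimed closed form for $L_{\kappa,RQ}$.

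I expect the main obstacle to be the critical-point algebra: isolating $d^{*2}$ cleanly from the balance condition and then simplifying the back-substitution into the compact closed form, where a single misplaced factor of $\alpha$ would corrupt the final rational expressions. Some care is also needed to confirm that $\alpha>0$ guarantees a strictly positive interior maximizer and that the base of the power term exceeds $1$, so that the constant is well defined; beyond that, the scalar reduction and the mean-value-theorem step are routine and parallel the ARD-SE argument verbatim.
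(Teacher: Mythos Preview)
Your proposal is correct and follows essentially the same route as the paper: reduce to the scalar function of $d=\|\bm{x}-\bm{x}'\|$, compute the first derivative, set the second derivative to zero to locate the maximizer, and substitute back. Your additional remarks on the mean-value-theorem framing and the boundary checks $d\to0^+$, $d\to\infty$ add a bit of rigor the paper omits, but the core computation is identical.
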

	\begin{proof}
		The proof follows \cref{corol_ARDSE}. We first derive the derivative of $\kappa_{RQ}$ w.r.t $d_{\text{RQ}}$ as 
		\begin{align}
			&\nabla_{d_{RQ}\left(\boldsymbol{x},\boldsymbol{x}^\prime\right)}\kappa_{RQ}\left(\boldsymbol{x},\boldsymbol{x}^\prime\right) \\
			=&-\sigma^2\left(1+\frac{d_{RQ}^2\left(\boldsymbol{x},\boldsymbol{x}^\prime\right)}{2\alpha \sigma_l^2}\right)^{-\alpha-1}\left(\frac{d_{RQ}\left(\boldsymbol{x},\boldsymbol{x}^\prime\right)}{\sigma_l^2}\right). \nonumber
		\end{align}
		According to the second derivative 
		\begin{align}
			&\nabla_{d_{RQ}\left(\boldsymbol{x},\boldsymbol{x}^\prime\right)} \left(\left|\nabla_{d_{RQ}\left(\boldsymbol{x},\boldsymbol{x}^\prime\right)}\kappa_{RQ}\left(\boldsymbol{x},\boldsymbol{x}^\prime\right)\right|\right) \nonumber \\
			=&\nabla_{d_{RQ} \left(\boldsymbol{x}, \boldsymbol{x}^\prime\right)} \left(\sigma_f^2\left(1 \!+\! \frac{d_{RQ}^2\left(\boldsymbol{x},\boldsymbol{x}^\prime\right)}{2\alpha \sigma_l^2}\right)^{-\alpha-1} \!\! \left(\frac{d_{RQ}\left(\boldsymbol{x},\boldsymbol{x}^\prime\right)}{\sigma_l^2}\right)\right)\nonumber\\
			=&\left(-\left(\alpha+1\right)\frac{\sigma_f^2}{\sigma_l^2}\left(1+\frac{d_{RQ}^2\left(\boldsymbol{x},\boldsymbol{x}^\prime\right)}{2\alpha \sigma_l^2}\right)^{-\alpha-2} \!\! \left(\frac{d_{RQ}^2\left(\boldsymbol{x},\boldsymbol{x}^\prime\right)}{\sigma_l^2}\right)\right) \nonumber \\
			&+ \left(\frac{\sigma_f^2}{\sigma_l^2}\left(1+\frac{d_{RQ}^2\left(\boldsymbol{x},\boldsymbol{x}^\prime\right)}{2\alpha \sigma_l^2}\right)^{-\alpha-1}\right) \\
			=&\frac{\sigma_f^2}{\sigma_l^2}\left(1+\frac{d_{RQ}^2\left(\boldsymbol{x},\boldsymbol{x}^\prime\right)}{2\alpha \sigma_l^2}\right)^{-\alpha-1} \nonumber \\
			&\times \left(1-\left(\alpha+1\right)\left(1+\frac{d_{RQ}^2\left(\boldsymbol{x},\boldsymbol{x}^\prime\right)}{2\alpha \sigma_l^2}\right)^{-1}\left(\frac{d_{RQ}^2\left(\boldsymbol{x},\boldsymbol{x}^\prime\right)}{\sigma_l^2}\right)\right), \nonumber
		\end{align}
		then the maximum of $\left|\nabla_{d_{RQ}\left(\boldsymbol{x},\boldsymbol{x}^\prime\right)}\kappa_{RQ}\left(\boldsymbol{x},\boldsymbol{x}^\prime\right)\right|$ is achieved when 
		\begin{align}
			\left(\alpha+1\right)\left(1+\frac{d_{RQ}^2\left(\boldsymbol{x},\boldsymbol{x}^\prime\right)}{2\alpha \sigma_l^2}\right)^{-1}\left(\frac{d_{RQ}^2\left(\boldsymbol{x},\boldsymbol{x}^\prime\right)}{\sigma_l^2}\right)=1.
		\end{align}
		Therefore, 
		\begin{align}
			L_{\kappa,RQ}=\frac{\sigma_f^2}{\sigma_l^2}\left(\frac{2\alpha^2+2\alpha}{2\alpha^2+2\alpha-1}\right)^{-\alpha-1}\sqrt{\frac{2\alpha \sigma_l^2}{2\alpha^2+2\alpha-1}}.
		\end{align}
		This concludes the proof.
	\end{proof}
	
	\subsection{Lipschitz Continuity for Periodic Kernel}
	
	The distance function and the Lipshitz constant associated with periodic kernel are given in the following corollary.
	
	\begin{corollary}
		Consider the periodic kernel defined by
		\begin{align}
			\kappa_{Per}\left(x,x^\prime\right)=\sigma_f^2 \exp \left ( - \frac{2\sin\left ( \frac{\pi\|\boldsymbol{x}-\boldsymbol{x}'\|}{p} \right ) }{\sigma_l^2} \right ),
		\end{align}
		where  $\sigma_f$ and $\sigma_l$ are the hyper-parameters and the period $p>0$ determines the distance between repetitions of the function.
		Choose the distance function as
		\begin{align}
			d_{\text{Per}}(\bm{x}, \bm{x}') = \|\boldsymbol{x}-\boldsymbol{x}'\|, ~~ \forall \bm{x}, \bm{x}' \in \mathbb{R}^n,
		\end{align}
		and consider the results in Lemma 3. Then the corresponding Lipschitz constant is written as
		\begin{align}
			L_{\kappa, \text{Per}} = \begin{cases}
				\frac{4\pi \sigma_f^2}{p\sigma_l^2} \exp (-\frac{2}{\sigma_l^2}), & \text{ if } \sigma_l^2 \geq 4 \\
				\frac{2\pi \sigma_f^2}{p\sigma_l} \exp (-\frac{1}{2}),  & \text{ if } \sigma_l^2 < 4
			\end{cases}.
		\end{align}
	\end{corollary}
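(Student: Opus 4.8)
The plan is to follow the same differentiate-then-maximize template used for the SE and ARD-SE kernels: since any uniform upper bound on $|\nabla_{d_{Per}} \kappa_{Per}|$ is a valid Lipschitz constant through the mean value theorem, I would compute the derivative of $\kappa_{Per}$ with respect to the scalar distance $d = d_{Per}(\bm{x},\bm{x}') = \|\bm{x}-\bm{x}'\|$ and then bound its magnitude over all admissible arguments. Writing $u = \pi d / p$ and applying the chain rule through the outer exponential, the inner squared-sine term, and $u$ (reading the inner term as the standard $\sin^2$ periodic kernel, which is what produces the stated constant), I expect
\begin{align}
\left| \nabla_{d_{Per}} \kappa_{Per} \right| = \frac{4\pi \sigma_f^2}{p \sigma_l^2}\, |\sin u|\, |\cos u| \exp\!\left( -\frac{2 \sin^2 u}{\sigma_l^2} \right).
\end{align}

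The key simplification is to decouple the two trigonometric factors: bounding $|\cos u| \le 1$ gives
\begin{align}
\left| \nabla_{d_{Per}} \kappa_{Per} \right| \le \frac{4\pi \sigma_f^2}{p \sigma_l^2}\, |\sin u| \exp\!\left( -\frac{2 \sin^2 u}{\sigma_l^2} \right),
\end{align}
which, after the substitution $\xi = |\sin u| \in [0,1]$, is exactly the one-dimensional SE-type problem of maximizing $g(\xi) = \xi \exp(-2\xi^2/\sigma_l^2)$ already solved for the SE kernel. Setting $g'(\xi) = 0$, i.e.\ $1 - 4\xi^2/\sigma_l^2 = 0$, locates the unconstrained maximizer at $\xi^\star = \sigma_l/2$.

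The main obstacle, and the source of the two-case formula, is that $\xi$ is confined to $[0,1]$ because $\xi = |\sin u|$, so the unconstrained optimum $\xi^\star = \sigma_l/2$ is admissible only when $\sigma_l/2 \le 1$. I would therefore split on $\sigma_l^2$ versus $4$: when $\sigma_l^2 < 4$ the interior critical point is attained, $g(\xi^\star) = (\sigma_l/2)\exp(-1/2)$, giving $L_{\kappa,Per} = (2\pi\sigma_f^2/(p\sigma_l))\exp(-1/2)$; when $\sigma_l^2 \ge 4$ the function $g$ is increasing on $[0,1]$, so its maximum is attained at the boundary $\xi = 1$, giving $g(1) = \exp(-2/\sigma_l^2)$ and hence $L_{\kappa,Per} = (4\pi\sigma_f^2/(p\sigma_l^2))\exp(-2/\sigma_l^2)$. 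As a consistency check I would verify that the two branches agree at $\sigma_l^2 = 4$, where both reduce to $(\pi\sigma_f^2/p)\exp(-1/2)$, confirming continuity of the bound across the case boundary. It is worth noting that the relaxation $|\cos u|\le 1$ renders this constant a valid but not necessarily tight upper bound, which suffices for the Lipschitz property invoked in Lemma~3.
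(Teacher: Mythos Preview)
Your proposal is correct and follows essentially the same route as the paper: both reduce to maximizing $g(\xi)=\xi\exp(-2\xi^2/\sigma_l^2)$ over $\xi=|\sin u|\in[0,1]$ and split on whether the unconstrained critical point $\xi^\star=\sigma_l/2$ lies in that interval. The only noteworthy difference is that you make the relaxation $|\cos u|\le 1$ explicit, whereas the paper's derivative expression simply omits the cosine factor and proceeds as if $|\nabla_{d_{Per}}\kappa_{Per}|=\tfrac{4\pi\sigma_f^2}{p\sigma_l^2}\,s\exp(-2s^2/\sigma_l^2)$ with $s=\sin(\pi d/p)$; your version is the more transparent of the two, and your closing remark that the resulting constant is a valid but not necessarily sharp Lipschitz bound is a genuine observation (for example at $\xi=1$ one has $\cos u=0$, so the actual derivative vanishes there).
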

	\begin{proof}
		Similar to \cref{corol_ARDSE}, the derivative of $\kappa_{Per}$ w.r.t $d_{\text{Per}}$ is
		\begin{align}\label{eq_derivationPerKernel}
			\nabla_{d_x\left(\boldsymbol{x},\boldsymbol{x}^\prime\right)}&\kappa_{Per}\left(\boldsymbol{x},\boldsymbol{x}^\prime\right) = - 4\sin{\left(\pi\frac{d_{Per}\left(\boldsymbol{x},\boldsymbol{x}^\prime\right)}{p}\right)}  \\
			&\times \left(\frac{\pi\sigma_f^2}{p\sigma_l^2}\right) \exp{\left(-\frac{2\sin^2{\left(\pi\frac{d_{Per}\left(\boldsymbol{x},\boldsymbol{x}^\prime\right)}{p}\right)}}{\sigma_l^2}\right)}. \nonumber
		\end{align}
		To facilitate the derivation, we define $s(\boldsymbol{x},\boldsymbol{x}') = \sin\left ( \pi\|\boldsymbol{x}-\boldsymbol{x}'\| / p \right)$. Then the derivative of \cref{eq_derivationPerKernel} w.r.t $s(\boldsymbol{x},\boldsymbol{x}')$ is written as
		\begin{align}
			&\nabla_{s\left(\boldsymbol{x}, \boldsymbol{x}^{\prime}\right)}\left(\left|\nabla_{d_{Per}\left(\boldsymbol{x}, \boldsymbol{x}^{\prime}\right)} \kappa_{Per}\left(\boldsymbol{x}, \boldsymbol{x}^{\prime}\right)\right|\right)\nonumber\\
			=&\nabla_{\sin \left(\pi \frac{\left\|\boldsymbol{x}-\boldsymbol{x}^{\prime}\right\|}{p}\right)}\left(\exp \left(-\frac{2 s^{2}\left(\boldsymbol{x}, \boldsymbol{x}^{\prime}\right)}{\sigma_l^{2}}\right) 4 s\left(\boldsymbol{x}, \boldsymbol{x}^{\prime}\right)\left(\frac{\pi \sigma_f^{2}}{p \sigma_l^{2}}\right)\right) \nonumber \\
			=&\left(-\frac{4 \pi \sigma_f^{2}}{p \sigma_l^{2}} \exp \left(-\frac{2 s^{2}\left(\boldsymbol{x}, \boldsymbol{x}^{\prime}\right)}{l^{2}}\right) \frac{4 s\left(\boldsymbol{x}, \boldsymbol{x}^{\prime}\right)}{\sigma_l^{2}} s\left(\boldsymbol{x}, \boldsymbol{x}^{\prime}\right)\right) \\
			&+\left(\frac{4 \pi \sigma_f^{2}}{p \sigma_l^{2}} \exp \left(-\frac{2 s^{2}\left(\boldsymbol{x}, \boldsymbol{x}^{\prime}\right)}{\sigma_l^{2}}\right)\right) \nonumber \\
			=& \frac{4 \pi \sigma_f^{2}}{p \sigma_l^{2}} \exp \left(-\frac{2 s^{2}\left(\boldsymbol{x}, \boldsymbol{x}^{\prime}\right)}{\sigma_l^{2}}\right)\left(-\frac{4 s^{2}\left(\boldsymbol{x}, \boldsymbol{x}^{\prime}\right)}{\sigma_l^{2}}+1\right). \nonumber
		\end{align}
		If $\frac{\sigma_l^2}{4}\geq1$, one has
		\begin{align}
			1-\frac{4s(\boldsymbol{x},\boldsymbol{x}')}{\sigma_l^2} \geq 1-\frac{4}{\sigma_l^2} \geq 0
		\end{align}
		due to the fact $s(\boldsymbol{x},\boldsymbol{x}') \geq 1$. Therefore, the maximum of $|\nabla_{d_{Per}\left(\boldsymbol{x}, \boldsymbol{x}^{\prime}\right)}\kappa_{Per}(\boldsymbol{x}, \boldsymbol{x}^{\prime})|$ locating at $s(\boldsymbol{x},\boldsymbol{x}')=1$ is 
		\begin{align}
			\max{\left|\nabla_{d_{Per}\left(\boldsymbol{x},\boldsymbol{x}^\prime\right)}\kappa_{Per}\left(\boldsymbol{x},\boldsymbol{x}^\prime\right)\right|}=\frac{4\pi\sigma_f^2}{p\sigma_l^2}\exp{\left(-\frac{2}{\sigma_l^2}\right)}.
		\end{align}
		If $\frac{\sigma_l^2}{4}<1$, then the maximum of $|\nabla_{d_{Per}\left(\boldsymbol{x}, \boldsymbol{x}^{\prime}\right)}\kappa_{Per}(\boldsymbol{x}, \boldsymbol{x}^{\prime})|$ locating at $s(\boldsymbol{x},\boldsymbol{x}')=\frac{l}{2}$ is 
		\begin{align}
			\max{\left|\nabla_{d_{Per}\left(x,x^\prime\right)}\kappa_{Per}\left(\boldsymbol{x},\boldsymbol{x}^\prime\right)\right|}=\frac{2\pi\sigma_f^2}{p\sigma_l}\exp{\left(-\frac{1}{2}\right)},
		\end{align}
		which concludes the proof.
	\end{proof}
	
	\section{Additional Simulation Results}\label{sec_additionalSimulation}
	
	\subsection{Simulation Configuration}\label{subsec_simulationConfiguration}
	The codes are executed in MATLAB R2021b on the laptop with AMD Ryzen 7 5800H with 16.0GB RAM using MediaTek Wi-Fi 6 MT7921 Wireless LAN Card.

	\subsection{Regression Tasks}\label{subsec_regressionTasks}
	
	In the GPgym platform, we employ a default configuration featuring 4 Gaussian Process (GP) models, with their parameters detailed in \cref{table_LoGGP_parameters}. 
	The time interval for the exchange of input $\bm{x}$ and the reception of predictions is set at $0.02$ seconds to and from all distributed GPs with the centralized node. 
	We utilize the ARD-SE kernel with optimized hyperparameters obtained from the full dataset. 
	Moreover, the listening frequency for all GP models is established at $1000$ Hz, and predictions are promptly relayed back to the centralized node upon generation.
	
	\begin{table*}[t]
		\centering
		\begin{tabular}{p{0.3\linewidth} llll}
			\hline
			Properties & GP model 1 & GP model 2 & GP model 3 & GP model 4 \\
			\hline
			Maximal Number of data samples  	& $100$ 	& $50$ 		& $500$ 	& $1000$ \\
			Maximal Number of local GP models		& $100$ 	& $200$ 	& $50$ 		& $20$ \\
			Overlap Rate 							& $0.01$ 	& $0.01$ 	& $0.01$ 	& $0.01$ \\
			\hline
		\end{tabular}
		\caption{Parameters for distributed LoG-GP} \label{table_LoGGP_parameters}
	\end{table*}
	
	In the first example, we set the information set threshold $\bar{\mathfrak{I}}$ to be $4$. 
	The objective is to assess the performance of existing approaches in asynchronous prediction. 
	This is particularly relevant as current approaches solely aggregate the most recent results from the GP models. 
	Consequently, the information set consistently comprises $4$ results sets, i.e., $\mathbb{P}_i ~(i=1,\cdots4)$. 
	Moreover, we provide the additional result of the delay time for the datasets SARCOS and PUMADYN32NM in \cref{fig_DelayTime_sarcos} and \ref{fig_DelayTime_pum32}, respectively.
	
	\begin{figure}[t]
		\centering
		\includegraphics[width=0.48\textwidth]{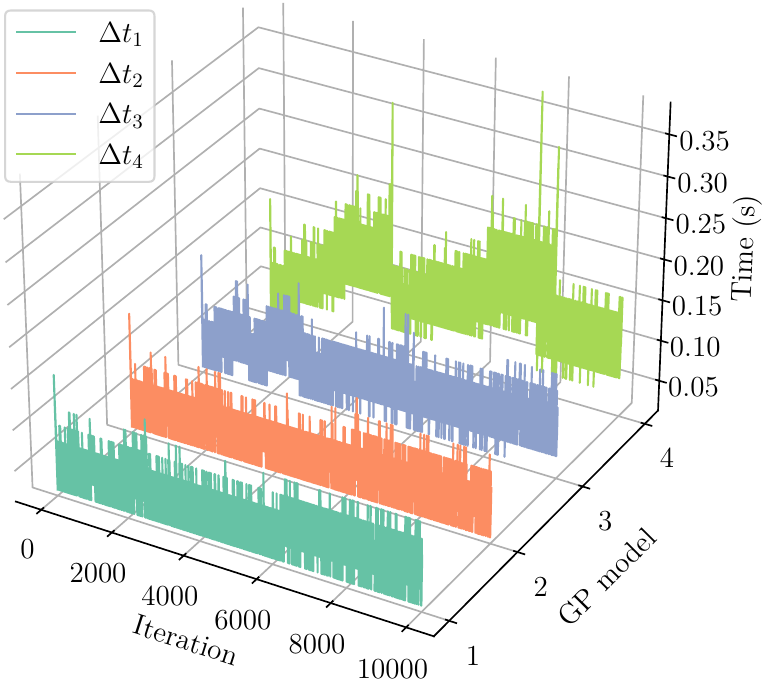}
		\caption{The delay time of the results of GP models.}
		\label{figure_timedelay_kin40k_gp}
	\end{figure}
	
	In \cref{figure_timedelay_kin40k_gp}, the temporal delay $\Delta t_i$ relative to the current time of used results of GP model $i~(i=1,2,3,4)$ on the KIN40K dataset is shown. 
	It is noteworthy that, given the existing methods only utilize the most recent results, the delay time $\Delta t_i$ is inherently identical to the delay time of results in the information set. 
	Conversely, with AsyncDGP, where the results in the information set are composed of previous iterations, the sorted delay time of previous results is presented in \cref{figure_timedelay_kin40k_agg} for the case where $\bar{\mathfrak{I}}= 4$. 
	Specially, during the interval wherein the delay time of used results is consistently increasing, each model incorporates previous results in its utilization.
	
	\begin{figure}[t]
		\centering
		\includegraphics[width=0.48\textwidth]{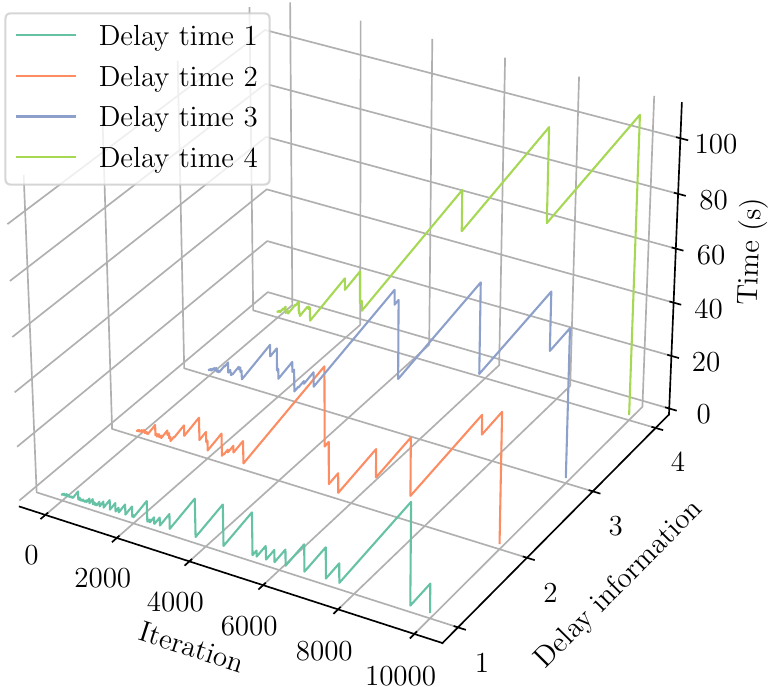}
		\caption{The sorted delay time of the results in the information set of AsyncDGP.}
		\label{figure_timedelay_kin40k_agg}
	\end{figure}
	
	\begin{figure}[t]
		\centering
		\begin{subfigure}{0.48\textwidth}
			\centering
			\includegraphics[width=\textwidth]{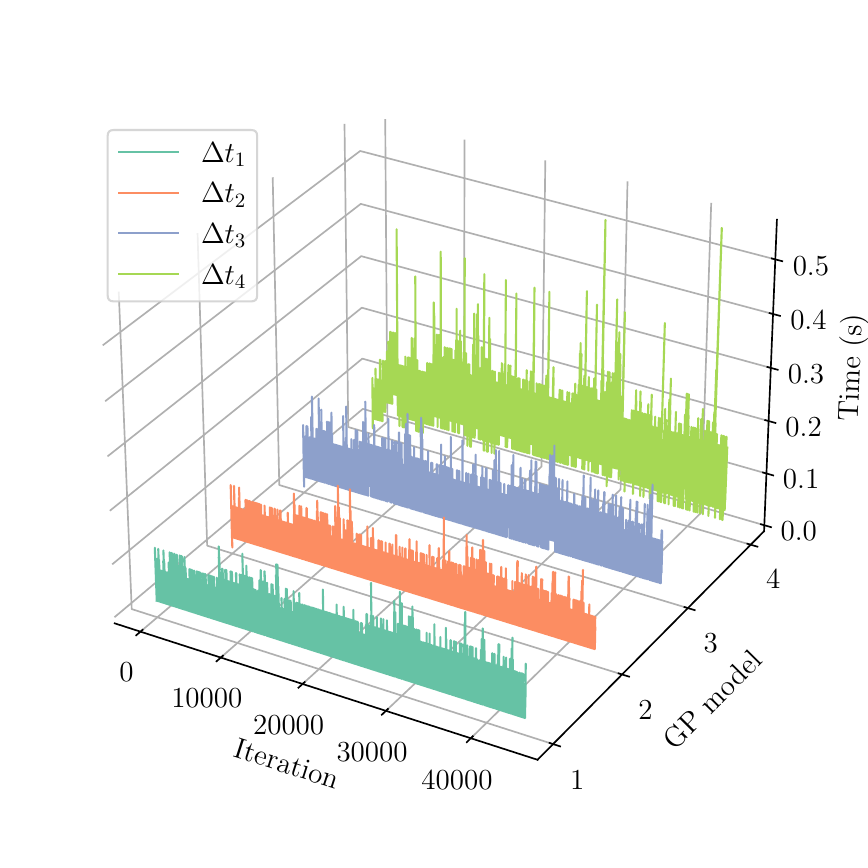}
			\caption{The delay time of the results of GP models.}
		\end{subfigure}
		\begin{subfigure}{0.48\textwidth}
			\centering
			\includegraphics[width=\textwidth]{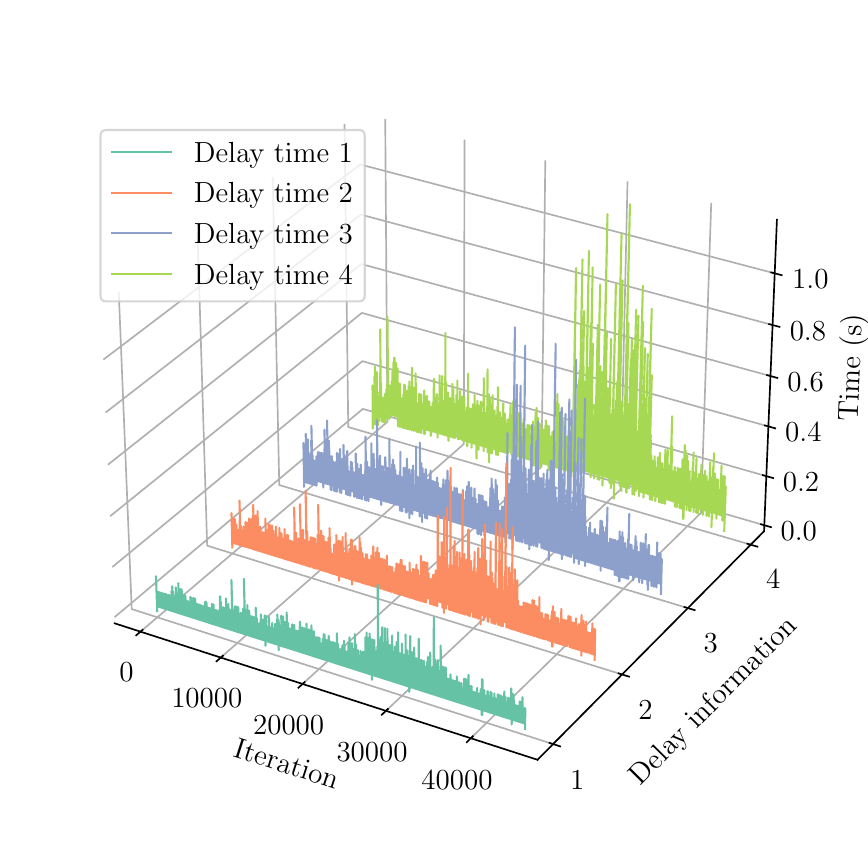}
			\caption{The delay time of the results in the information set.}
		\end{subfigure}
		\caption{The delay time of the results on SARCOS.}%
		\label{fig_DelayTime_sarcos}%
	\end{figure}
	
	\begin{figure}[t]
		\centering
		\begin{subfigure}{0.48\textwidth}
			\centering
			\includegraphics[width=\textwidth]{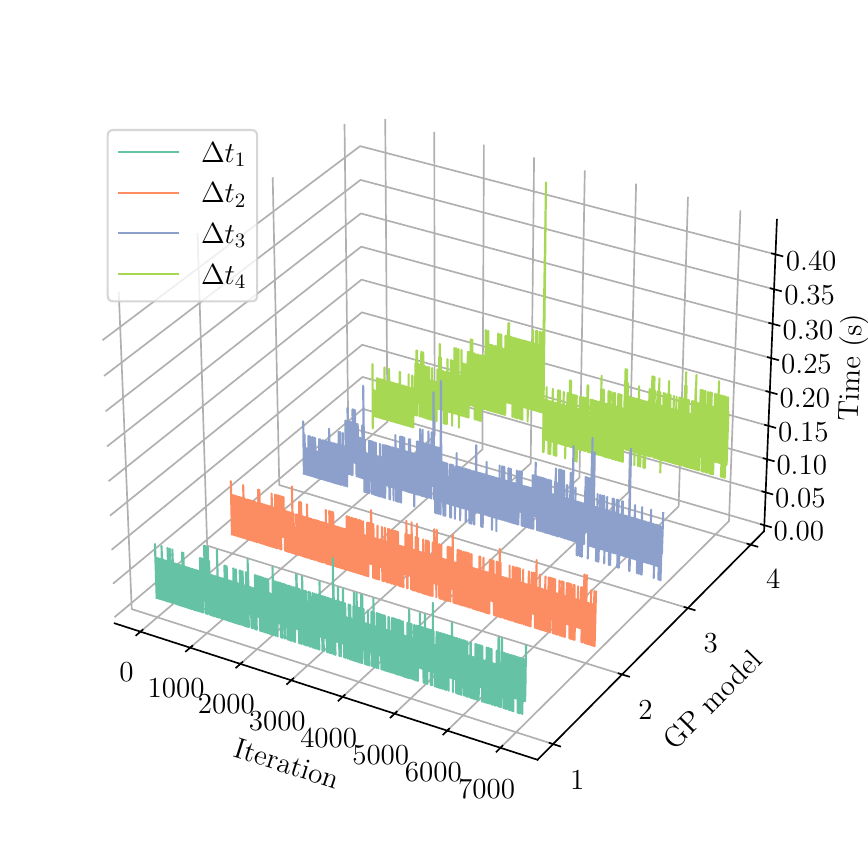}
			\caption{The delay time of the results of GP models.}
		\end{subfigure}
		\begin{subfigure}{0.48\textwidth}
			\centering
			\includegraphics[width=\textwidth]{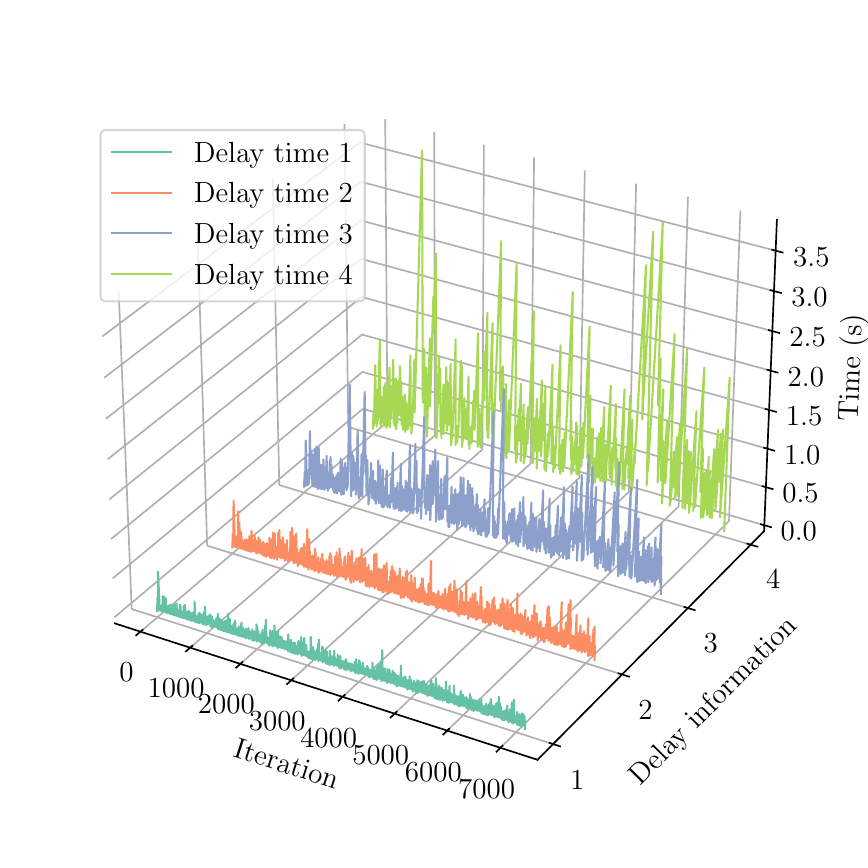}
			\caption{The delay time of the results in the information set.}
		\end{subfigure}
		\caption{The delay time of the results on PUMADYN32NM.}%
		\label{fig_DelayTime_pum32}%
	\end{figure}
	
	Additionally, we present a scenario with $\bar{\mathfrak{I}}=10$, wherein the central node can aggregate more results to infer the unknown functions. To align with this objective and enable similar functionality for other approaches, we have devised the aggregation strategies for the $5$ existing distributed methods, which are formulated as follows.
	\begin{itemize}
		\item Bayesian committee machine (BCM):
		\begin{align}
			\hat{f}(\bm{x}(t)) = \sum_{i = 1}^M \sum_{k = 0}^{\bar{k}_i(t)} \omega_k^i(t) \mu_i(\bm{x}(t_i^{k})) + \omega_m(t) m(\bm{x}(t)),
		\end{align}
		with the aggregation weights 
		\begin{align}
			&\omega_k^i(t) = \omega^2(t) \sigma_i^{-2}(\bm{x}(t_i^{k})), \\
			&\omega_m(t) = \omega^2(t) (1 - \rho(t)) \sigma_f^{-2}, \nonumber
		\end{align}
		where 
		\begin{align}
			&\rho(t) = \sum_{i = 1}^M (\bar{k}_i(t) + 1), \\
			&\omega^{-2}(t) = \sum_{i = 1}^M \sum_{k = 0}^{\bar{k}_i(t)} \sigma_i^{-2}(\bm{x}(t_i^{k})) + (1 - \rho(t)) \sigma_f^{-2}. \nonumber
		\end{align}
		\item Robust Bayesian committee machine (rBCM):
		\begin{align}
			\hat{f}(\bm{x}(t)) = \sum_{i = 1}^M \sum_{k = 0}^{\bar{k}_i(t)} \omega_k^i(t) \mu_i(\bm{x}(t_i^{k})) + \omega_m(t) m(\bm{x}(t)),
		\end{align}
		with the aggregation weights 
		\begin{align}
			&\omega_k^i(t) = \omega^2(t) \rho_k^i(t) \sigma_i^{-2}(\bm{x}(t_i^{k})), \\
			&\omega_m(t) = \omega^2(t) (1 - \rho(t)) \sigma_f^{-2}, \nonumber
		\end{align}
		where $\rho(t) = \sum_{i = 1}^M \sum_{k = 0}^{\bar{k}_i(t)} \rho_k^i(t)$ and
		\begin{align}
			\rho_k^i(t) =&  \log( \sigma_f / \sigma_i(\bm{x}(t_i^{k})), ~~\omega^{-2}(t) \\
			=& \sum_{i = 1}^M \sum_{k = 0}^{\bar{k}_i(t)} \rho_k^i(t) \sigma_i^{-2}(\bm{x}(t_i^{k})) + (1 - \rho(t)) \sigma_f^{-2}. \nonumber
		\end{align}
		\item Product-of-experts (POE):
		\begin{align}
			\hat{f}(\bm{x}(t)) = \sum_{i = 1}^M \sum_{k = 0}^{\bar{k}_i(t)} \omega_k^i(t) \mu_i(\bm{x}(t_i^k)), 
		\end{align}
		with the aggregation weights $\omega_k^i(t) = \omega^2(t) \sigma_i^{-2}(\bm{x}(t_i^k))$, where $\omega^{-2}(t) = \sum_{i = 1}^M \sum_{k = 0}^{\bar{k}_i(t)} \sigma_i^{-2}(\bm{x}(t_i^{k}))$.
		\item Generalized product-of-experts (gPOE):
		\begin{align}
			\hat{f}(\bm{x}(t)) = \sum_{i = 1}^M \sum_{j = 0}^{\bar{k}_i(t)} \omega_k^i(t) \mu_i(\bm{x}(t_i^{k})),
		\end{align}
		with the aggregation weights $\omega_k^i(t)$ defined as $\omega_k^i(t) = \omega^2(t) \rho_k^i(t) \sigma_i^{-2}(\bm{x}(t_i^{k}))$, where
		\begin{align}
			&\rho_k^i(t) =  \log( \sigma_f / \sigma_i(\bm{x}(t_i^k)), \\
			&\omega^{-2}(t) = \sum_{i = 1}^M \sum_{k = 0}^{\bar{k}_i(t)} \rho_k^i(t) \sigma_i^{-2}(\bm{x}(t_i^{k})). \nonumber
		\end{align}
		\item Mixture of experts (MOE):
		\begin{align}
			\hat{f}(\bm{x}(t)) = \frac{\sum_{i = 1}^M \sum_{k = 0}^{\bar{k}_i(t)} \mu_i(\bm{x}(t_i^k))}{\sum_{i = 1}^M \sum_{k = 0}^{\bar{k}_i(t)} 1}. 
		\end{align}
	\end{itemize}
	
	\subsection{Control Tasks}\label{subsec_controlTasks}
	In the control task, a total of 1000 training samples are uniformly distributed in the domain $[-3,3] \times [-3,3]$. We conduct $100$ simulations over a duration of $20$ seconds each, employing state broadcasting and listening with a fixed time interval of $0.02$ seconds. The initial states are evenly sampled in $[0,1] \times [0,1]$, and the reference follows a uniform distribution between $[0.4,0.6]$ and $[3,5]$ for each simulation, respectively.
	To demonstrate the control performance during over 100 Monte-Carlo simulations, \cref{fig_controlStates} shows the mean values (solid line) and its variance (light shade) in the 1000 iterations. Moreover, the statistical results are shown in \cref{fig_controlStateMeanMax}.
	
	\begin{figure*}[t]
		\centering
		\includegraphics[width=\textwidth]{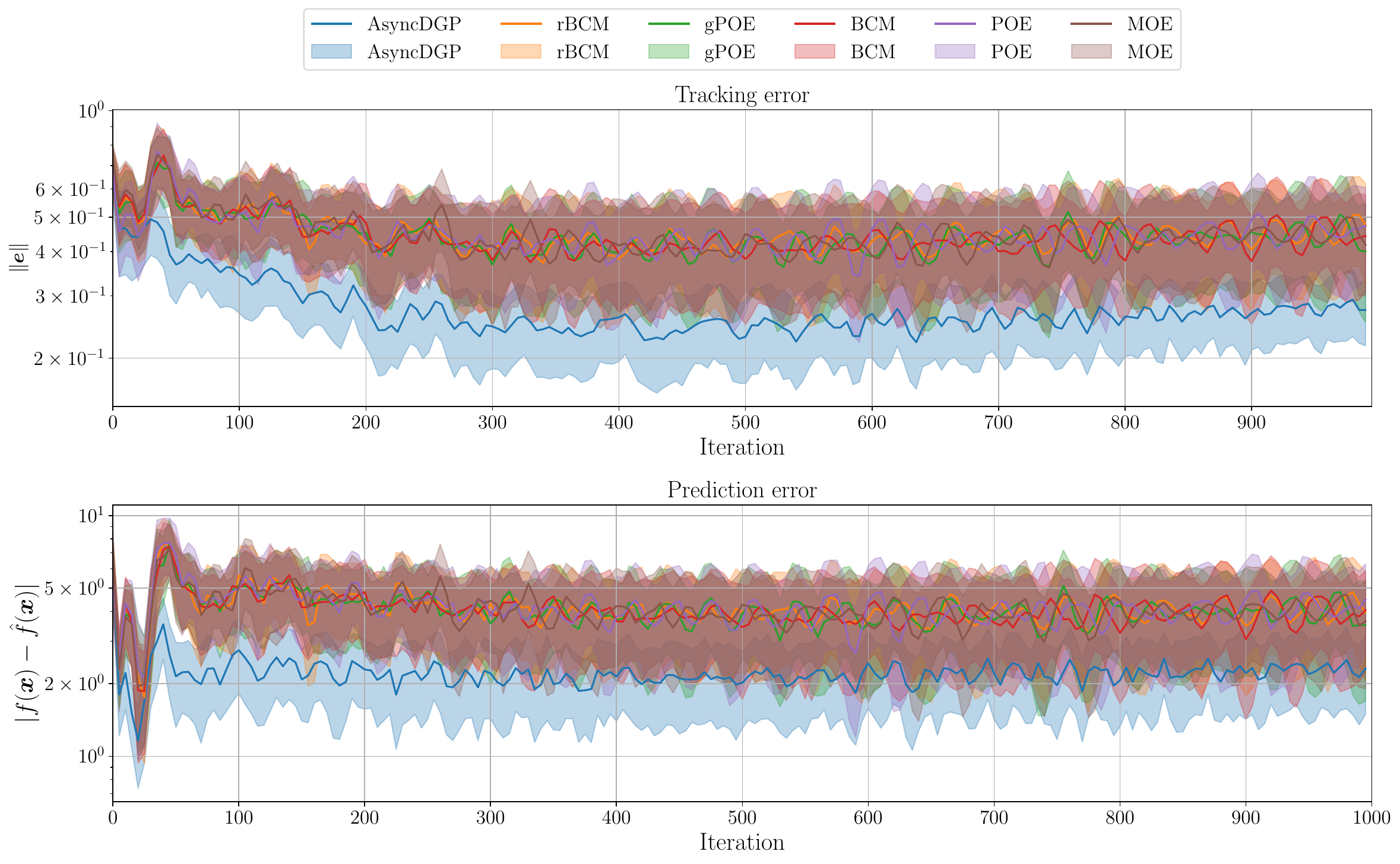}
		\caption{The prediction errors and tracking errors over 1000 iterations.}
		\label{fig_controlStates}
	\end{figure*}
	
	\begin{figure*}[t]
		\centering
		\begin{subfigure}{0.48\textwidth}
			\centering
			\includegraphics[width=\textwidth]{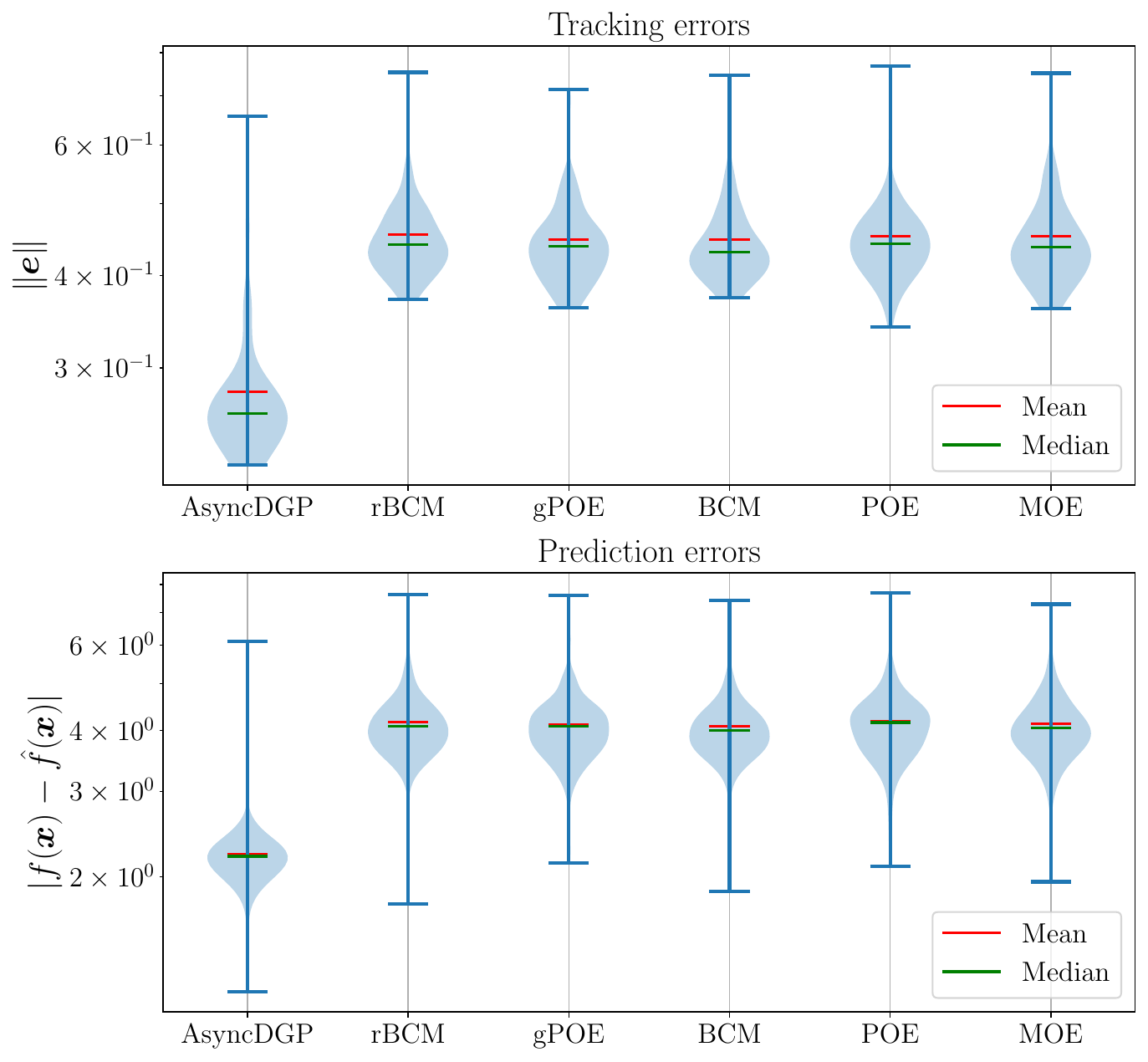}
			\caption{The mean values over 100 Monte-Carlo simulations.}
		\end{subfigure}
		\begin{subfigure}{0.48\textwidth}
			\centering
			\includegraphics[width=\textwidth]{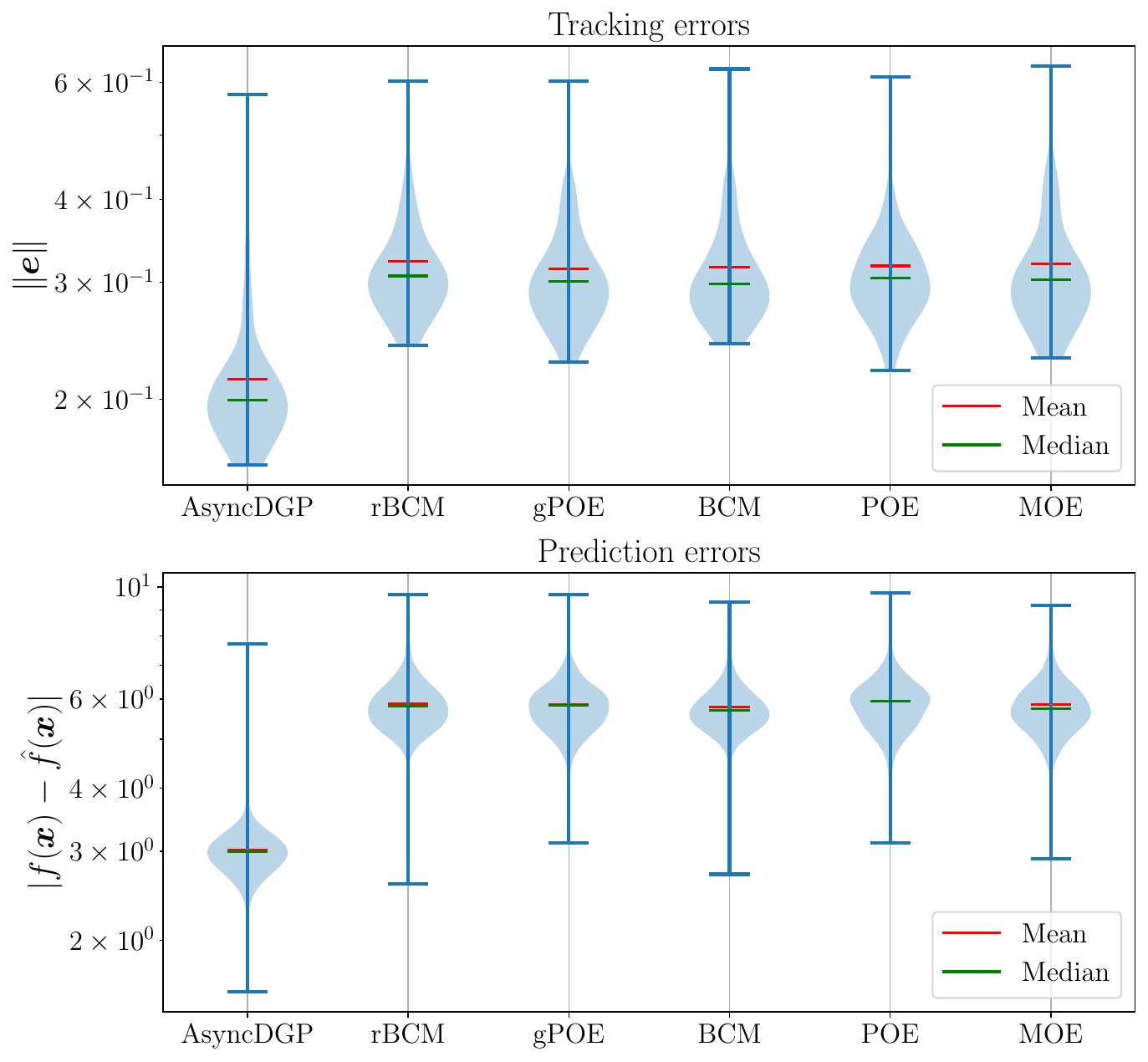}
			\caption{The maximum values over 100 Monte-Carlo simulations.}
		\end{subfigure}
		\caption{The violin plots for mean and maximum values of the Monte-Carlo simulation.}%
		\label{fig_controlStateMeanMax}%
	\end{figure*}
	
	\section*{Acknowledgments}
	
	This work has been financially supported by the Federal Ministry of Education and Research of Germany in the programme of ``Souverän. Digital. Vernetzt.'' under joint project 6G-life with project identification number: 16KISK002, and by the European Research Council (ERC) Consolidator Grant ``Safe data-driven control for human-centric systems (CO-MAN)'' under grant agreement number 864686.
	
	\bibliographystyle{IEEEtran}
	\bibliography{refs}
	
\end{document}